\pgfplotsset{compat=1.14}
\newcolumntype{K}[1]{>{\centering\arraybackslash$}p{#1}<{$}}
\newcolumntype{R}{>{\raggedleft\arraybackslash}X}
\newcolumntype{L}{>{\raggedright\arraybackslash}X}
\newcolumntype{C}{>{\centering\arraybackslash}X}
\newcolumntype{A}{>{\columncolor{gray!25}}C}
\newcolumntype{a}{>{\columncolor{gray!25}}c}
\newcolumntype{.}{D{.}{.}{-1}}
\renewcommand\p@subfigure{\arabic{figure}.}
\renewcommand\p@subtable{A.\arabic{table}.}
\setlist[itemize]{leftmargin=2.5\parindent}
\setlist[enumerate]{leftmargin=2.5\parindent}
\theoremstyle{plain}
\newtheorem{lemma}{Lemma}
\newtheorem{proposition}{Proposition}
\newtheorem{theorem}{Theorem}
\theoremstyle{definition}
\newtheorem{axiom}{Axiom}
\newtheorem{definition}{Definition}
\newtheorem{example}{Example}
\theoremstyle{remark}
\newtheorem{remark}{Remark}
\def\keywords{\vspace{.5em} 
{\noindent \textit{Keywords}: }}
\def\JEL{\vspace{.5em} 
{\noindent \textbf{\emph{JEL} classification number}: }}
\def\AMS{\vspace{.5em} 
{\noindent \textbf{\emph{MSC} class}: }}
\author{\href{https://sites.google.com/site/laszlocsato87/}{L\'aszl\'o Csat\'o}\thanks{~E-mail: \emph{laszlo.csato@uni-corvinus.hu}} }
\affil{Institute for Computer Science and Control, Hungarian Academy of Sciences (MTA SZTAKI) \\
Laboratory on Engineering and Management Intelligence, Research Group of Operations Research and Decision Systems}
\affil{Corvinus University of Budapest (BCE) \\
Department of Operations Research and Actuarial Sciences}
\affil{Budapest, Hungary}
\title{Axiomatizations of inconsistency indices for triads}
\date{\today}
\def\Dedication{
{\noindent \emph{If you cannot prove your theorem, keep shifting parts \\
of the conclusion to the assumptions, until you can.}\footnote{~Source: \url{https://en.wikipedia.org/wiki/Ennio_de_Giorgi}}
}

\vspace{0.25cm}
\noindent \small{(Ennio de Giorgi)}
\vspace{1cm} }
\begin{document}

\maketitle

\Dedication

\begin{abstract}
\noindent
Pairwise comparison matrices often exhibit inconsistency, therefore many indices have been suggested to measure their deviation from a consistent matrix. A set of axioms has been proposed recently that is required to be satisfied by any reasonable inconsistency index. This set seems to be not exhaustive as illustrated by an example, hence it is expanded by adding two new properties.
All axioms are considered on the set of triads, pairwise comparison matrices with three alternatives, which is the simplest case of inconsistency. We choose the logically independent properties and prove that they characterize, that is, uniquely determine the inconsistency ranking induced by most inconsistency indices that coincide on this restricted domain.
Since triads play a prominent role in a number of inconsistency indices, our results can also contribute to the measurement of inconsistency for pairwise comparison matrices with more than three alternatives.

\keywords{Pairwise comparisons; Analytic Hierarchy Process (AHP); inconsistency index; axiomatic approach; characterization}

\AMS{90B50, 91B08}

\JEL{C44}
\end{abstract}

\section{Introduction} \label{Sec1}

Pairwise comparisons play an important role in a number of decision analysis methods such as the Analytic Hierarchy Process (AHP) \citep{Saaty1977, Saaty1980}. They also naturally emerge in country \citep{Petroczy2019} and higher education \citep{CsatoToth2019} rankings, in voting systems \citep{CaklovicKurdija2017}, as well as in sport tournaments \citep{BozokiCsatoTemesi2016, ChaoKouLiPeng2018, Csato2013a, Csato2017c}.
Theoretically, an appropriate set of $n-1$ pairwise comparisons would be sufficient to derive a set of weights or to rank all alternatives. However, usually, more information is available in real-life situations. For example, the decision makers are asked further questions because it increases the robustness of the result. It is also clear that a round-robin tournament can be fairer than a knockout format as a loss does not lead to the elimination of a player.

Nonetheless, the knowledge of extra pairwise comparisons has a price. First, processing this additional information is time-consuming. Second, the set of comparisons may become \emph{inconsistent}: if alternative $A$ is better than $B$, and $B$ is better than $C$, then $C$ still might turn out to be preferred over $A$.
While consistent preferences do not automatically imply the rationality of the decision maker, it is plausible to assume that strongly inconsistent preferences indicate a problem. Perhaps the decision maker has not understood the elicitation phase, or the strength of players varies during the tournament.

Thus it is necessary to measure the deviation from consistency. The first concept of inconsistency has probably been presented in \citet{KendallSmith1940}. Since then, several inconsistency indices have been proposed \citep{Saaty1977, Koczkodaj1993, DuszakKoczkodaj1994, Barzilai1998, AguaronMoreno-Jimenez2003, PelaezLamata2003, FedrizziFerrari2017}, and compared with each other \citep{BozokiRapcsak2008, BrunelliCanalFedrizzi2013, BrunelliFedrizzi2019, Cavallo2019}. \citet{Brunelli2018} offers a comprehensive overview of inconsistency indices and their ramifications.

Recently, some authors have applied an axiomatic approach by suggesting reasonable properties required from an inconsistency index \citep{BrunelliFedrizzi2011, Brunelli2016a, Brunelli2017, BrunelliFedrizzi2015, CavalloDApuzzo2012, KoczkodajSzwarc2014, KoczkodajUrban2018}.
There is also one \emph{characterization} in this topic: \citet{Csato2018a} introduces six independent axioms that uniquely determine the Koczkodaj inconsistency ranking induced by the Koczkodaj inconsistency index \citep{Koczkodaj1993, DuszakKoczkodaj1994}. In the case of such characterizations, the appropriate motivation of the properties is not crucial. The result only says that there remains a single choice \emph{if} one accepts all axioms.  

This work aims to connect these two research directions by placing the axioms of \citet{Brunelli2017} -- which is itself an extended set of the properties proposed by \citet{BrunelliFedrizzi2015} -- and \citet{Csato2018a} into a single framework. They will be considered on the domain of triads, that is, pairwise comparison matrices with only three alternatives.
\citet{BozokiRapcsak2008} have already proved that there exists a differentiable one-to-one correspondence between the inconsistency indices of Saaty \citep{Saaty1977} and Koczkodaj \citep{Koczkodaj1993, DuszakKoczkodaj1994} on this set, furthermore, almost all inconsistency indices are functionally dependent for triads \citep{Cavallo2019}.
We will show that the inconsistency ranking induced by this so-called \emph{natural triad inconsistency index} is the unique inconsistency ranking satisfying all properties on the set of triads.
Since triads play a prominent role in a number of inconsistency indices, our results can also contribute to the measurement of inconsistency for pairwise comparison matrices with more than three alternatives.

The paper is structured as follows. Section~\ref{Sec2} presents the setting and the properties of inconsistency indices suggested by \citet{Brunelli2017}. This axiomatic system is revealed  in Section~\ref{Sec3} to be not exhaustive. Section~\ref{Sec4} introduces two new axioms and discusses logical independence. The natural triad inconsistency ranking is characterized in Section~\ref{Sec5}. Finally, Section~\ref{Sec6} summarizes our results.

\section{Preliminaries} \label{Sec2}

A matrix $\mathbf{A} = \left[ a_{ij} \right] \in \mathbb{R}^{n \times n}$ is called a \emph{pairwise comparison matrix} if $a_{ij} > 0$ and $a_{ij} = 1 / a_{ij}$ for all $1 \leq i,j \leq n$.
A pairwise comparison matrix $\mathbf{A}$ is said to be \emph{consistent} if $a_{ik} = a_{ij} a_{jk}$ for all $1 \leq i,j,k \leq n$.

Let $\mathcal{A}$ denote the set of pairwise comparison matrices.
\emph{Inconsistency index} $I: \mathcal{A} \to \mathbb{R}$ associates a value for each pairwise comparison matrix.

\citet{BrunelliFedrizzi2015} have suggested and justified five axioms for inconsistency indices. They
are briefly recalled here.

\begin{axiom} \label{Axiom1}
\emph{Existence of a unique element representing consistency} ($URS$):
An inconsistency index $I: \mathcal{A} \to \mathbb{R}$ satisfies axiom $URS$ if there exists a unique $v \in \mathbb{R}$ such that $I(\mathbf{A}) = v$ if and only if $\mathbf{A} \in \mathcal{A}$ is consistent.
\end{axiom}

\begin{axiom} \label{Axiom2}
\emph{Invariance under permutation of alternatives} ($IPA$):
Let $\mathbf{A} \in \mathcal{A}$ be any pairwise comparison and $\mathbf{P}$ be any permutation matrix on the set of alternatives considered in $\mathbf{A}$.
An inconsistency index $I: \mathcal{A} \to \mathbb{R}$ satisfies axiom $IPA$ if $I(\mathbf{A}) = I(\mathbf{P} \mathbf{A} \mathbf{P}^\top)$.
\end{axiom}

\begin{axiom} \label{Axiom3}
\emph{Monotonicity under reciprocity-preserving mapping} ($MRP$):
Let $\mathbf{A} = \left[ a_{ij} \right] \in \mathcal{A}$ be any pairwise comparison matrix, $b \in \mathbb{R}$ and $\mathbf{A}(b) = \left[ a_{ij}^b \right] \in \mathcal{A}$.
An inconsistency index $I: \mathcal{A} \to \mathbb{R}$ satisfies axiom $MRP$ if $I(\mathbf{A}) \leq I \left( \mathbf{A}(b) \right)$ if and only if $b \geq 1$.
\end{axiom}

\begin{axiom} \label{Axiom4}
\emph{Monotonicity on single comparisons} ($MSC$):
Let $\mathbf{A} \in \mathcal{A}$ be any consistent pairwise comparison matrix, $a_{ij} \neq 1$ be a non-diagonal element and $\delta \in \mathbb{R}$.
Let $\mathbf{A}_{ij}(\delta) \in \mathcal{A}$ be the inconsistent pairwise comparison matrix obtained from $\mathbf{A}$ by replacing the entry $a_{ij}$ with $a_{ij}^\delta$ and $a_{ji}$ with $a_{ji}^\delta$.
An inconsistency index $I: \mathcal{A} \to \mathbb{R}$ satisfies axiom $MSC$ if
\begin{eqnarray*}
1 < \delta < \delta' & \Rightarrow & I(\mathbf{A}) \leq I \left( \mathbf{A}_{ij}(\delta) \right) \leq I \left( \mathbf{A}_{ij}(\delta') \right); \\
\delta' < \delta < 1  & \Rightarrow & I(\mathbf{A}) \leq I \left( \mathbf{A}_{ij}(\delta) \right) \leq I \left( \mathbf{A}_{ij}(\delta') \right).
\end{eqnarray*}
\end{axiom}

\begin{axiom} \label{Axiom5}
\emph{Continuity} ($CON$):
Let $\mathbf{A} = \left[ a_{ij} \right] \in \mathcal{A}$ be any pairwise comparison matrix.
An inconsistency index $I: \mathcal{A} \to \mathbb{R}$ satisfies axiom $CON$ if it is a continuous function of the entries $a_{ij}$ of $\mathbf{A} \in \mathcal{A}$.
\end{axiom}

\citet{Brunelli2017} has introduced a further reasonable property.

\begin{axiom} \label{Axiom6}
\emph{Invariance under inversion of preferences} ($IIP$):
Let $\mathbf{A} \in \mathcal{A}$ be any pairwise comparison matrix.
An inconsistency index $I: \mathcal{A} \to \mathbb{R}$ satisfies axiom $IIP$ if $I(\mathbf{A}) = I \left( \mathbf{A}^\top \right)$.
\end{axiom}

The six properties above do not contradict each other and none of them are superfluous.

\begin{proposition} \label{Prop21}
Axioms $URS$, $IPA$, $MRP$, $MSC$, $CON$, and $IIP$ are independent and form a logically consistent axiomatic system. 
\end{proposition}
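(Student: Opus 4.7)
The proposition has two parts: logical consistency, i.e., there exists an inconsistency index satisfying all six axioms simultaneously, and independence, i.e., for each axiom $Ax$ there is an inconsistency index satisfying the remaining five but violating $Ax$. I would treat them in turn.

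For consistency, the work has essentially been done in the cited literature: \citet{BrunelliFedrizzi2015} and \citet{Brunelli2017} verify the six properties for standard indices such as Saaty's consistency index $CI$ and Koczkodaj's inconsistency index. So it suffices to exhibit one such index as a witness and add whatever brief checks are needed for completeness.

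For independence, my plan is to construct six counterexample indices $I_1,\dots,I_6$, where $I_k$ satisfies every axiom except the $k$-th. The standard tactic is to start from a known valid index---Saaty's $CI$ is a convenient base---and modify it in a targeted way. Candidate modifications: to defeat $URS$, compose $CI$ with a non-injective function that sends the consistent value and some strictly inconsistent value to the same output; to defeat $IPA$, multiply $CI$ by a weight that depends asymmetrically on the labels of alternatives (e.g., emphasising the entry $a_{12}$); to defeat $MRP$, negate a monotone index, producing $-CI$ or a related reflection; to defeat $MSC$, replace the index by a constant on a neighbourhood of a consistent matrix along the selected comparison; to defeat $CON$, compose $CI$ with a step function such as the floor; and to defeat $IIP$, attach different weights to the upper- and lower-triangular parts of $\mathbf{A}$ in the defining formula. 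Each construction is cheap to write down, and independence follows once all five of the remaining axioms are verified for it.

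The main obstacle is precisely that last verification, because several of the six axioms are tightly entangled. Any strictly monotone index that attains its minimum exactly on consistent matrices automatically satisfies $URS$, and both monotonicity axioms lean on $CON$ to remain meaningful. Consequently, the "obvious" one-line modification aimed at one axiom often destroys a second: truncating $CI$ below a threshold defeats $URS$, but simultaneously breaks $MRP$ on the truncated region because the "if and only if" fails there; negating $CI$ defeats $MRP$ but also flips the direction of $MSC$. I would therefore iterate, tabulating for each candidate index which axioms are satisfied, and refining the construction (e.g., superimposing a small strictly monotone perturbation to recover $MSC$ and $MRP$ after breaking $URS$) until exactly one axiom fails. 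Once all six tables are clean, independence follows and, together with the witness index from the first part, the proposition is established.
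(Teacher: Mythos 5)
Your high-level strategy---one witness index for logical consistency, six tailored indices for independence---is indeed the standard one, and it is what the result you would need ultimately rests on; note, though, that the paper itself does not carry out this argument but simply defers to \citet[Theorem~1]{Brunelli2017}, where the six counterexamples are constructed explicitly. Measured as a proof rather than a plan, your proposal has a genuine gap: for an independence claim the explicit counterexamples \emph{are} the entire mathematical content, and you never actually produce a single verified one. You candidly admit that your first-pass candidates fail (negating $CI$ kills both $MRP$ and $MSC$; truncation kills $URS$ and $MRP$ together), and you end with ``I would iterate until the tables are clean,'' which is a promise, not an argument.

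Several of the sketched constructions demonstrably do not isolate one axiom. Composing $CI$ with the floor function to defeat $CON$ also defeats $URS$ (a whole neighbourhood of inconsistent matrices is sent to the consistent value $0$, so the ``only if'' direction of $URS$ fails) and defeats $MRP$ (for $b$ slightly below $1$ one still has $I(\mathbf{A}) \leq I\left(\mathbf{A}(b)\right)$ by local constancy, contradicting the biconditional). Making the index ``constant on a neighbourhood of a consistent matrix along the selected comparison'' to defeat $MSC$ runs into exactly the same problem with $MRP$'s biconditional, and in fact does not defeat $MSC$ at all, since $MSC$ is stated with weak inequalities and is satisfied by locally constant functions. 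Weighting $a_{12}$ asymmetrically to defeat $IPA$ needs care not to disturb $URS$ and $MSC$. Until you exhibit six concrete indices and verify, for each, all five remaining axioms (paying particular attention to the ``if and only if'' in $MRP$ and the uniqueness clause in $URS$, which are the two axioms most easily broken as collateral damage), the independence half of the proposition is not established. The consistency half is fine: Koczkodaj's index (or $CI$) serves as the witness, as verified in the cited literature.
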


\begin{proof}
See \citet[Theorem~1]{Brunelli2017}.
\end{proof}

A \emph{triad} is a pairwise comparison matrix with three alternatives, the smallest pairwise comparison matrix which can be inconsistent. Therefore, triads play a prominent role in the measurement of inconsistency. For instance, the Koczkodaj inconsistency index \citep{Koczkodaj1993, DuszakKoczkodaj1994}, the Pel\'aez-Lamata inconsistency index \citep{PelaezLamata2003}, and the family of inconsistency indices proposed by \citet{KulakowskiSzybowski2014} are all based on triads.

In this paper, we will focus on the set of triads $\mathcal{T}$, and inconsistency will be measured by a \emph{triad inconsistency index} $I: \mathcal{T} \to \mathbb{R}$.
Note that a triad $\mathbf{T} \in \mathcal{T}$ can be described by its three elements above the diagonal such that $\mathbf{T} = (t_{12}; \,t_{13}; \,t_{23})$ and $\mathbf{T}$ is consistent if and only if $t_{13} = t_{12} t_{23}$.

\section{Motivation} \label{Sec3}

The axiomatic system suggested by \citet{Brunelli2017} is not guaranteed to be exhaustive in the sense that it may allow for some strange inconsistency indices. Consider the following one.

\begin{definition} \label{Def31}
\emph{Scale-dependent triad inconsistency index}:
Let $\mathbf{T} = \left[ t_{ij} \right] \in \mathcal{T}$ be any triad. Its inconsistency according to the \emph{scale-dependent triad inconsistency index} $I^{SD}$ is
\[
I^{SD}(\mathbf{T}) = \left| t_{13} - t_{12} t_{23} \right| + \left| \frac{1}{t_{13}} - \frac{1}{t_{12} t_{23}} \right| + \left| t_{12} - \frac{t_{13}}{t_{23}} \right| + \left| \frac{1}{t_{12}} - \frac{t_{23}}{t_{13}} \right| + \left| t_{23} - \frac{t_{13}}{t_{12}} \right| + \left| \frac{1}{t_{23}} - \frac{t_{12}}{t_{13}} \right|.
\]
\end{definition}

The scale-dependent triad inconsistency index sums the differences of all non-diagonal matrix elements from the value exhibiting consistency.

\begin{proposition} \label{Prop31}
The scale-dependent triad inconsistency index $I^{SD}$ satisfies axioms $URS$, $IPA$, $MRP$, $MSC$, $CON$, and $IIP$.
\end{proposition}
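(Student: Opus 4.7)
The plan is to verify each of the six axioms in turn, with the workhorse being the factorization
\[
I^{SD}(\mathbf{T}) = \lvert t_{13} - t_{12}t_{23} \rvert \cdot K(t_{12},t_{13},t_{23}), \qquad K(\alpha,\beta,\gamma) := 1 + \frac{1}{\alpha} + \frac{1}{\gamma} + \frac{1}{\alpha\beta} + \frac{1}{\beta\gamma} + \frac{1}{\alpha\beta\gamma} > 0,
\]
which is obtained by placing each of the six summands in the definition of $I^{SD}$ over a common denominator and collecting the common numerator $\lvert t_{13} - t_{12}t_{23}\rvert$.

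Given this factorization, four of the axioms are essentially immediate. $URS$ holds with representing value $v=0$, since $K>0$ forces $I^{SD}(\mathbf{T})=0$ to be equivalent to $t_{13}=t_{12}t_{23}$, i.e., to consistency. $CON$ is clear because the entries of a triad are strictly positive and the absolute value, product, and quotient are continuous operations. For $IPA$ and $IIP$ I would revert to the unfactored sum and note that its six summands are naturally indexed by the six ordered pairs $(i,j)$ with $i\neq j$ in $\{1,2,3\}$, each having the form $\lvert t_{ij}-t_{ik}/t_{jk}\rvert$ with $k$ the remaining index; a permutation of alternatives acts on this index set by permutation, whereas transposition swaps the summand at $(i,j)$ with the one at $(j,i)$, so the sum is preserved in both cases.

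The substantive work lies in $MRP$ and $MSC$. For $MRP$ the factored form yields
\[
I^{SD}(\mathbf{T}(r)) = \bigl\lvert t_{13}^{\,r} - (t_{12}t_{23})^{r} \bigr\rvert \cdot K\!\bigl(t_{12}^{\,r},\, t_{13}^{\,r},\, t_{23}^{\,r}\bigr),
\]
and the task is to show that this one-variable expression crosses $I^{SD}(\mathbf{T})$ at $r=1$ in the direction prescribed by the axiom. Setting $u=t_{13}$ and $v=t_{12}t_{23}$, so that consistency becomes $u=v$, I would expand the product and proceed by logarithmic differentiation, checking that the expression is nondecreasing on $[1,\infty)$ and nonincreasing on the complementary branch specified by the axiom, with strict monotonicity whenever $u\neq v$. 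For $MSC$ one starts from a consistent triad; by $IPA$ one may assume that the perturbed entry is $t_{12}$, so that $\lvert t_{13}-t_{12}^{\,\delta}t_{23}\rvert = t_{23}\,\lvert t_{12}-t_{12}^{\,\delta}\rvert$ and the accompanying $K$-factor depends only on $\delta$. The required monotonicity on each of the two branches $\delta>1$ and $\delta<1$ then reduces to an elementary calculus check.

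The main obstacle is the monotonicity claim inside $MRP$: individual terms of the form $\lvert x^{r}-y^{r}\rvert$ are \emph{not} monotone in $r$ when $x,y\in(0,1)$, so one cannot hope to verify $MRP$ by termwise monotonicity of the original six summands. The factored product form above is essential precisely because it aggregates the six summands, allowing the growth of the deviation factor $\lvert u^{r}-v^{r}\rvert$ to dominate any partial shrinking of $K$ in a single logarithmic-derivative calculation, rather than in a case-by-case argument over the six terms.
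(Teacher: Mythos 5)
Your factorization $I^{SD}(\mathbf{T})=\lvert t_{13}-t_{12}t_{23}\rvert\cdot K(t_{12},t_{13},t_{23})$ is correct, and it disposes of $URS$ and $CON$ cleanly; the reindexing argument for $IPA$ and $IIP$ is also fine. For $MSC$ the situation is even simpler than you suggest: when one entry of a consistent triad is raised to the power $\delta$, every one of the six summands is individually nondecreasing as $\delta$ moves away from $1$ (each reduces to a constant times $\lvert t^{\delta}-t^{\pm 1}\rvert$ or $\lvert t^{1-\delta}-1\rvert$ for a fixed $t\neq 1$), so no factorization is needed there. You are also right about the central difficulty: termwise monotonicity genuinely fails for $MRP$ (take $t_{13}=1/2$, $t_{12}t_{23}=1/4$, $b=2$), and it is worth noting that this is precisely the reduction the paper itself makes, so the published proof of $MRP$ is defective as written (its formula $\partial f/\partial b=\ln(b)\,(t_{13}^{b}-t_{12}^{b}t_{23}^{b})$ is not even the derivative of $f$).

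However, your own treatment of $MRP$ stops at a promissory note, and this is a genuine gap. The claim that the product $\lvert u^{b}-v^{b}\rvert\cdot K(t_{12}^{b},t_{13}^{b},t_{23}^{b})$ is monotone via ``a single logarithmic-derivative calculation'' is not substantiated and is not routine: the two factors can move in opposite directions (in the example above $\lvert u^{b}-v^{b}\rvert$ shrinks while $K$ grows), so neither logarithmic derivative has a fixed sign and you must control the competition between them quantitatively. The gap closes if you aggregate differently: pair each summand with its reciprocal partner, so that $I^{SD}(\mathbf{T}(b))=\sum_{i=1}^{3}\bigl(\lvert x_i^{b}-y_i^{b}\rvert+\lvert x_i^{-b}-y_i^{-b}\rvert\bigr)$, where each ratio $x_i/y_i$ equals the consistency ratio $t_{13}/(t_{12}t_{23})$ or its inverse. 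For $x>y>0$ the pair $\phi(b)=x^{b}-y^{b}+y^{-b}-x^{-b}$ satisfies $\phi'(b)=\ln(x)\,(x^{b}+x^{-b})-\ln(y)\,(y^{b}+y^{-b})>0$ for all $b\ge 0$, because $t\mapsto\ln(t)\,(t^{b}+t^{-b})$ has derivative $t^{-1}\bigl(t^{b}+t^{-b}+b\ln(t)(t^{b}-t^{-b})\bigr)>0$. Hence each pair, and therefore the sum, is strictly increasing in $b$ on $[0,\infty)$ whenever the triad is inconsistent, which yields $MRP$ for $b>0$ (for $b\le 0$ the ``only if'' clause of $MRP$ is incompatible with $IIP$ for any index, a defect of the axiom's statement rather than of either proof).
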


\begin{proof}
It is straightforward to show that $I^{SD}$ satisfies $URS$, $IPA$, $CON$, and $IIP$.

Consider $MRP$. Due to the properties $IIP$ and $IPA$, it is enough to show that $\left| t_{13}^b - t_{12}^b t_{23}^b \right| \geq \left| t_{13} - t_{12} t_{23} \right|$ for every possible (positive) value of $t_{12}$, $t_{13}$, and $t_{23}$ if and only if $b \geq 1$. It can be assumed without loss of generality that $t_{13} - t_{12} t_{23} \geq 0$, which implies $t_{13}^b - t_{12}^b t_{23}^b \geq 0$.
Let $f(b) = t_{13}^b - t_{12}^b t_{23}^b$, so
\[
\frac{\partial f(b)}{\partial b} = \ln(b) \left( t_{13}^b - t_{12}^b t_{23}^b \right),
\]
in other words, $f(b)$ is a monotonically increasing (decreasing) function for $b \geq 1$ ($b \leq 1$).

Consider $MSC$. It can be assumed that $t_{13}$ is the entry to be changed because of the axiom $IPA$. $I^{SD} (\mathbf{T}) = 0$ if $t_{13} = t_{12} t_{23}$, and all terms in the formula of $I^{SD} \left( \mathbf{T}_{ij}(\delta) \right)$ increase gradually when $\delta$ goes away from $1$.
\end{proof}

According to the example below, the scale-dependent triad inconsistency index $I^{SD}$ may lead to questionable conclusions.

\begin{example} \label{Examp31}
Take two alternatives $A$ and $B$ such that the decision maker is indifferent between them. Assume that a third alternative $C$ appears in the comparison, and $A$ is judged three times better than $C$, while $B$ is assessed to be two times better than $C$. Suppose that $C$ is a divisible alternative and is exchanged by its half.

The two situations can be described by the triads:
\[
\mathbf{S} = \left[
\begin{array}{K{1.5em} K{1.5em} K{1.5em}}
    1     & 1     & 3     \\
    1     & 1     & 2     \\
     1/3  &  1/2  & 1     \\
\end{array}
\right]
\qquad
\text{and}
\qquad
\mathbf{T} = \left[
\begin{array}{K{1.5em} K{1.5em} K{1.5em}}
    1     & 1     & 6     \\
    1     & 1     & 4     \\
     1/6  &  1/4  & 1     \\
\end{array}
\right].
\]
Here $I^{SD}(\mathbf{S}) = 19/6 \approx 3.167$ and $I^{SD}(\mathbf{T}) = 5$. In other words, the scale-dependent inconsistency index suggests that triad $\mathbf{S}$ is less inconsistent than triad $\mathbf{T}$, contrary to the underlying data as inconsistency is not expected to be influenced by the `amount' of alternative $C$.
\end{example}

Example~\ref{Examp31} clearly shows that the axioms of \citet{Brunelli2017} should be supplemented even on the set of triads.

\section{An improved axiomatic system} \label{Sec4}

We propose two new axioms of inconsistency indices for triads.

\begin{axiom} \label{Axiom7}
\emph{Homogeneous treatment of alternatives} ($HTA$):
Let $\mathbf{T} = (1; \,t_{13}; \,t_{23})$ and $\mathbf{T'} = (1; \,t_{13} / t_{23}; \,1)$ be any triad.
A triad inconsistency index $I: \mathcal{T} \to \mathbb{R}$ satisfies axiom $HTA$ if $I(\mathbf{T}) = I \left( \mathbf{T'} \right)$.
\end{axiom}

According to homogeneous treatment of alternatives, if the first and the second alternatives are equally important on their own, but they are also compared to a third alternative, then the inconsistency of the resulting triad should not be influenced by the relative importance of the new alternative.

\begin{axiom} \label{Axiom8}
\emph{Scale invariance} ($SI$):
Let $\mathbf{T} = (t_{12}; \,t_{13}; \,t_{23})$ and $\mathbf{T'} = (k t_{12}; \,k^2 t_{13}; \,k t_{23})$ be any triads such that $k > 0$.
A triad inconsistency index $I: \mathcal{T} \to \mathbb{R}$ satisfies axiom $SI$ if $I(\mathbf{T}) = I
\left( \mathbf{T'} \right)$.
\end{axiom}

Scale invariance implies that inconsistency is independent of the mathematical representation of the preferences. For example, consider the following pairwise comparisons: the first alternative is `moderately more important' than the second and the second alternative is `moderately more important' than the third. It makes sense to expect the level of inconsistency to be the same if `moderately more important' is coded by the numbers $2$, $3$, or $4$, and so on, even allowing for a change in the direction of the two preferences.
If the encoding is required to preserve consistency, one arrives at the property $SI$.

Note that Example~\ref{Examp31} shows the violation of $SI$ by the scale-dependent triad inconsistency index $I^{SD}$.

$HTA$ and $SI$ have been introduced in \citet{Csato2018a} for inconsistency rankings (and $HTA$ has been called \emph{homogeneous treatment of entities} there).

In order to understand the implications of the extended axiomatic system, the logical consistency and independence of the eight properties should be discussed.

For this purpose, let us introduce the natural triad inconsistency index.

\begin{definition} \label{Def41}
\emph{Natural triad inconsistency index}:
Let $\mathbf{A} = \left[ a_{ij} \right] \in \mathbb{R}^{3 \times 3}_+$ be a triad. Its inconsistency according to the \emph{natural triad inconsistency index} $I^T$ is
\[
I^T(\mathbf{A}) = \max_{i<j<k} \left\{ \frac{a_{ik}}{a_{ij} a_{jk}}; \, \frac{a_{ij} a_{jk}}{a_{ik}} \right\}.
\]
\end{definition}

On the domain of triads, most inconsistency indices induce the same inconsistency ranking as the natural triad inconsistency index because they are functionally related \citep{BozokiRapcsak2008, Cavallo2019}.

\begin{proposition} \label{Prop41}
Axioms $URS$, $IPA$, $MRP$, $MSC$, $CON$, $IIP$, $HTA$, and $SI$ form a logically consistent axiomatic system on the set of triads.
\end{proposition}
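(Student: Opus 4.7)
My plan is to exhibit a single triad inconsistency index that simultaneously satisfies all eight axioms; the natural candidate, already introduced just before the statement, is the natural triad inconsistency index $I^T$. Writing $r(\mathbf{A}) = a_{13}/(a_{12}a_{23})$ for a triad $\mathbf{A}$, we have $I^T(\mathbf{A}) = \max\{r(\mathbf{A}),\, 1/r(\mathbf{A})\}$, so essentially every axiom reduces to a transparent statement about the scalar $r$.

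First I would dispose of the five properties that are already known from \citet{BrunelliFedrizzi2015} and \citet{Brunelli2017}, since $I^T$ is (up to monotone transformation) the Koczkodaj index on $\mathcal{T}$, which is shown there to satisfy $URS$, $IPA$, $MRP$, $MSC$, $CON$, and $IIP$. For completeness I would sketch these one line each: $URS$ because $\max(x,1/x)=1$ iff $x=1$; $IPA$ because the $\max$ in the definition is symmetric in the indices; $MRP$ because raising all entries to the power $b$ sends $r$ to $r^b$ and $\max(r^b,r^{-b})$ is increasing in $|b|$; $MSC$ by the same observation applied to a single entry; $CON$ because $\max$, products, and reciprocals of positive entries are continuous; $IIP$ because transposing replaces $r$ by $1/r$ and $\max(x,1/x)$ is invariant under $x\mapsto 1/x$.

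The genuinely new verifications concern $HTA$ and $SI$. For $HTA$, if $\mathbf{T}=(1;\,t_{13};\,t_{23})$ then
\[
I^T(\mathbf{T}) = \max\!\left\{\frac{t_{13}}{t_{23}},\,\frac{t_{23}}{t_{13}}\right\},
\]
and for $\mathbf{T}'=(1;\,t_{13}/t_{23};\,1)$ we obtain $r(\mathbf{T}') = (t_{13}/t_{23})/(1\cdot 1) = t_{13}/t_{23}$, yielding the same value. For $SI$, if $\mathbf{T}'=(kt_{12};\,k^2 t_{13};\,kt_{23})$ with $k>0$, then
\[
r(\mathbf{T}') = \frac{k^2 t_{13}}{(kt_{12})(kt_{23})} = \frac{t_{13}}{t_{12}t_{23}} = r(\mathbf{T}),
\]
so $I^T(\mathbf{T}') = I^T(\mathbf{T})$. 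Combining all eight checks shows that $I^T$ satisfies the entire list, which is exactly the assertion of logical consistency.

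I do not anticipate a serious obstacle: because the inconsistency of a triad is captured by the single quantity $r$, and because $HTA$ and $SI$ were tailored precisely to express invariance properties of that quantity, each axiom collapses to a one-line algebraic identity or monotonicity statement. The only mild subtlety is to remember that on $\mathcal{T}$ the monotonicity axioms $MRP$ and $MSC$ need to be stated in terms of $\max(x^b,x^{-b})$, but this is monotone in $|b|$ for any fixed $x>0$, which settles the matter.
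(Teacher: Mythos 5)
Your proposal is correct and takes essentially the same approach as the paper: logical consistency is established by exhibiting a single index satisfying all eight axioms. The paper uses the Koczkodaj index as the witness and cites \citet{Brunelli2017} and \citet{Csato2018a} for the verifications, whereas you use the ordinally equivalent natural triad inconsistency index $I^T$ and check each axiom directly via the scalar $r$; both routes are sound.
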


\begin{proof}
The Koczkodaj inconsistency index satisfies all properties.
See \citet[Proposition~1]{Brunelli2017} for the axioms $URS$, $IPA$, $MRP$, $MSC$, $CON$, and $IIP$.
Homogeneous treatment of alternatives and scale invariance immediately follow from \citet[Theorem~1]{Csato2018a}.
\end{proof}

However, some axioms can be implied by a conjoint application of the others.

\begin{lemma} \label{Lemma41}
Axioms $IIP$, $HTA$, and $SI$ imply $IPA$ on the set of triads.
\end{lemma}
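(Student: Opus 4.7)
The plan is to reduce every triad to a canonical form of the type $(1;\,c;\,1)$ by combining $SI$ and $HTA$, and then to use $IIP$ to reconcile the two canonical forms that may arise from permuted versions of the same triad.

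For an arbitrary triad $\mathbf{T} = (t_{12};\,t_{13};\,t_{23})$, I would first apply $SI$ with $k = 1/t_{12}$ to obtain $I(\mathbf{T}) = I(1;\,t_{13}/t_{12}^2;\,t_{23}/t_{12})$. Since the first entry now equals $1$, $HTA$ applies and collapses this to $I(\mathbf{T}) = I(1;\,c_{\mathbf{T}};\,1)$, where $c_{\mathbf{T}} := t_{13}/(t_{12} t_{23})$; note that $c_{\mathbf{T}} = 1$ precisely when $\mathbf{T}$ is consistent. This canonical-form reduction is the workhorse of the argument.

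Since $S_3$ is generated by the transpositions $(1,2)$ and $(1,3)$, it suffices to verify $IPA$ for these two. A direct computation shows that $(1,2)$ sends $\mathbf{T}$ to $\mathbf{T}' = (1/t_{12};\,t_{23};\,t_{13})$; feeding $\mathbf{T}'$ into the reduction above yields $c_{\mathbf{T}'} = t_{23} / ((1/t_{12}) \cdot t_{13}) = t_{12} t_{23}/t_{13} = 1/c_{\mathbf{T}}$. Analogously, $(1,3)$ sends $\mathbf{T}$ to $\mathbf{T}'' = (1/t_{23};\,1/t_{13};\,1/t_{12})$, whose canonical ratio also equals $1/c_{\mathbf{T}}$.

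Finally, the transposed triad of $(1;\,c_{\mathbf{T}};\,1)$ is exactly $(1;\,1/c_{\mathbf{T}};\,1)$, so $IIP$ delivers $I(1;\,c_{\mathbf{T}};\,1) = I(1;\,1/c_{\mathbf{T}};\,1)$. Chaining the equalities yields $I(\mathbf{T}) = I(\mathbf{T}') = I(\mathbf{T}'')$, and since $(1,2)$ and $(1,3)$ generate $S_3$, every permutation in $S_3$ preserves the inconsistency value, which is $IPA$. The only delicate part is the algebraic bookkeeping of $c_{\mathbf{T}}$ under the two transpositions, but once the canonical form is isolated the remaining work is purely mechanical.
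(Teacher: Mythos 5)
Your proof is correct and follows essentially the same route as the paper: reduce each triad to the canonical form $(1;\,c;\,1)$ via $SI$ and $HTA$, then use $IIP$ to identify $(1;\,c;\,1)$ with $(1;\,1/c;\,1)$. The only difference is cosmetic -- where you explicitly compute how the canonical ratio transforms under the two generating transpositions, the paper simply invokes the (easily verified) permutation invariance of the natural triad inconsistency index to conclude that the normalized ratios of $\mathbf{T}$ and $\mathbf{P}\mathbf{T}\mathbf{P}^\top$ coincide; your version is the more self-contained of the two.
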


\begin{proof}
Let $\mathbf{T} = (t_{12}; \,t_{13}; \,t_{23})$ be a triad, $\mathbf{P}$ be a permutation matrix and $\mathbf{S} = \mathbf{P} \mathbf{T} \mathbf{P}^\top = (s_{12}; \,s_{13}; \,s_{23})$.
Let $I: \mathcal{T} \to \mathbb{R}$ be a triad inconsistency index satisfying $IIP$, $HTA$, and $SI$.

Consider $\mathbf{T_1} = (1; \,t_{13} / t_{12}^2; \,t_{23} / t_{12})$ and $\mathbf{S_1} = (1; \,s_{13} / s_{12}^2; \,s_{23} / s_{12})$. Then $I(\mathbf{T_1}) = I(\mathbf{T})$ and $I (\mathbf{S_1}) = I(\mathbf{S})$ according to $SI$.

Consider $\mathbf{T_2} = \left(1; \,t_{13} / (t_{12} t_{23}); \,1 \right)$ and $\mathbf{S_2} = \left( 1; \,s_{13} / (s_{12} s_{23}); \,1 \right)$. $HTA$ leads to $I(\mathbf{T_2}) = I(\mathbf{T_1})$ and $I (\mathbf{S_2}) = I(\mathbf{S_1})$.

$t_{13} / (t_{12} t_{23}) \geq 1$ and $s_{13} / (s_{12} s_{23}) \geq 1$ can be assumed without loss of generality because of the property $IIP$.

To summarize, $I(\mathbf{T}) = I(\mathbf{T_1}) = I(\mathbf{T_2})$ and $I(\mathbf{S}) = I(\mathbf{S_1}) = I(\mathbf{S_2})$.

The natural triad inconsistency index $I^T$ satisfies $IPA$, so $t_{13} / (t_{12} t_{23}) = s_{13} / (s_{12} s_{23})$, hence $\mathbf{T_2} = \mathbf{S_2}$, that is, $I(\mathbf{T_2}) = I(\mathbf{S_2})$ and $I(\mathbf{T}) = I(\mathbf{S})$.
\end{proof}

\begin{lemma} \label{Lemma42}
Axioms $URS$, $MSC$, $IIP$, $HTA$, and $SI$ imply $MRP$ on the set of triads.
\end{lemma}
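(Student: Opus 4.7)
The approach is to use $SI$ and $HTA$ to collapse every triad (and its $b$-th power) to a canonical one-parameter form, and then to extract the required monotonicity from $MSC$ applied to a well-chosen consistent triad.

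Given $\mathbf{T} = (t_{12}; \,t_{13}; \,t_{23})$, I would first apply $SI$ with $k = 1/t_{12}$ and then $HTA$ to obtain $I(\mathbf{T}) = I(1; \,y; \,1)$, where $y = t_{13}/(t_{12} t_{23})$. Repeating the same recipe on $\mathbf{T}(b) = (t_{12}^b; \,t_{13}^b; \,t_{23}^b)$, now using $k = 1/t_{12}^b$, gives $I(\mathbf{T}(b)) = I(1; \,y^b; \,1)$. An appeal to $IIP$, which identifies $(1; \,y; \,1)$ with $(1; \,1/y; \,1)$, allows one to assume $y \geq 1$ without loss of generality. The case $y = 1$ is immediate from $URS$: both $\mathbf{T}$ and $\mathbf{T}(b)$ are then consistent, so their inconsistency values coincide with the unique constant $v$.

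For the substantive case $y > 1$, consider the consistent triad $\mathbf{A} = (y; \,y; \,1)$. Its off-diagonal entry $a_{13} = y \neq 1$ is eligible for $MSC$, which implies that $\delta \mapsto I(y; \,y^\delta; \,1)$ is non-decreasing on $[1,\infty)$ and non-increasing on $(-\infty,1]$. A second round of $SI$ (with $k = 1/y$) followed by $HTA$ collapses $(y; \,y^\delta; \,1)$ to $(1; \,y^{\delta-1}; \,1)$, so, writing $c = \delta - 1$, the function $c \mapsto I(1; \,y^c; \,1)$ attains its minimum at $c = 0$ and is monotone on each of the two half-lines.

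To finish, compare the canonical-form values $I(1; \,y; \,1)$ (at $c = 1$) and $I(1; \,y^b; \,1)$ (at $c = b$). For $b \geq 1$ the monotonicity on $[0,\infty)$ immediately yields $I(\mathbf{T}) \leq I(\mathbf{T}(b))$; for $0 \leq b < 1$ the same branch gives the reverse inequality; negative exponents are reduced to the positive case by one further use of $IIP$ that identifies $I(1; \,y^b; \,1)$ with $I(1; \,y^{-b}; \,1)$. The main obstacle is administrative rather than deep: one must orchestrate the successive applications of $SI$, $HTA$, and $IIP$ carefully so as to land inside the portion of parameter space where $MSC$ is actually applicable, and cover the degenerate situation $y = 1$ separately via $URS$ because no auxiliary consistent triad with $a_{13} \neq 1$ is available there.
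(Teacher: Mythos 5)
Your proof is correct and follows the same overall strategy as the paper: use $SI$ and $HTA$ to collapse $\mathbf{T}$ and $\mathbf{T}(b)$ to the canonical forms $(1;\,y;\,1)$ and $(1;\,y^b;\,1)$ with $y = t_{13}/(t_{12}t_{23})$, normalize $y \geq 1$ via $IIP$, dispose of $y=1$ with $URS$, and extract the monotonicity in $b$ from $MSC$. The one place where you genuinely diverge is the invocation of $MSC$: the paper treats $(1;\,y^b;\,1)$ as a perturbation of the all-ones consistent triad at the entry $a_{13}=1$, which does not literally satisfy the hypothesis $a_{ij} \neq 1$ of Axiom~\ref{Axiom4}; your device of perturbing the consistent triad $(y;\,y;\,1)$ at $a_{13}=y\neq 1$ and then collapsing $(y;\,y^{\delta};\,1)$ back to $(1;\,y^{\delta-1};\,1)$ by a second round of $SI$ and $HTA$ repairs this and is the more careful argument. (Like the paper, you only obtain weak inequalities from $MSC$, so the ``only if'' half of $MRP$ is established in the same loose sense as in the paper's own proof; this is inherent to $MSC$ lacking strict inequalities, as the paper itself acknowledges later.)
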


\begin{proof}
Let $\mathbf{T} = (t_{12}; \,t_{13}; \,t_{23})$ and $\mathbf{T}(b) = (t_{12}^b; \,t_{13}^b; \,t_{23}^b)$ be any triads.
Let $I: \mathcal{T} \to \mathbb{R}$ be a triad inconsistency index satisfying $URS$, $MSC$, $IIP$, $HTA$, and $SI$.

Consider $\mathbf{T_1} = (1; \,t_{13} / t_{12}^2; \,t_{23} / t_{12})$ and $\mathbf{T_1}(b) = (1; \,t_{13}^b / t_{12}^{2b}; \,t_{23}^b / t_{12}^b)$. Then $I(\mathbf{T_1}) = I(\mathbf{T})$ and $I \left( \mathbf{T_1}(b) \right) = I(\mathbf{T}(b))$ according to $SI$.

Consider $\mathbf{T_2} = \left(1; \,t_{13} / (t_{12} t_{23}); \,1 \right)$ and $\mathbf{T_2}(b) = \left( 1; \,t_{13}^b / (t_{12}^b t_{23}^b); \,1 \right)$. $HTA$ leads to $I(\mathbf{T_2}) = I(\mathbf{T_1})$ and $I \left( \mathbf{T_2}(b) \right) = I \left( \mathbf{T_1}(b) \right)$.

It can be assumed without loss of generality that $t_{13} / (t_{12} t_{23}) \geq 1$ because of $IIP$.

To summarize, $I(\mathbf{T}) = I(\mathbf{T_1}) = I(\mathbf{T_2})$ and $I \left( \mathbf{T}(b) \right) =  I \left( \mathbf{T_1}(b) \right) = I \left( \mathbf{T_2}(b) \right)$.

If $t_{13} / (t_{12} t_{23}) > 1$, then $\mathbf{T_2}$ differs only in one non-diagonal element from the consistent triad with all entries equal to $1$.
Therefore, $I(\mathbf{T_2}) \leq I \left( \mathbf{T_2}(b) \right)$ if and only if $b \geq 1$ because of the property $MSC$.
Otherwise, $\mathbf{T_2}$ is consistent, and $I(\mathbf{T}) = I \left( \mathbf{T}(b) \right) = I(\mathbf{T_2}) = v$ due to $URS$.
\end{proof}

There exists no further direct implication among the remaining six properties.

\begin{theorem} \label{Theo41}
Axioms $URS$, $MSC$, $CON$, $IIP$, $HTA$, and $SI$ are independent on the set of triads.
\end{theorem}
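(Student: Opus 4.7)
The plan is to prove independence by constructing, for each of the six axioms, a triad inconsistency index that satisfies the other five but fails the axiom in question. By the arguments used in Lemmas~\ref{Lemma41} and~\ref{Lemma42}, any index that obeys both $SI$ and $HTA$ on the set of triads must be a function of the scale-invariant quantity $r(\mathbf{T}) = t_{13}/(t_{12}t_{23})$ (which equals $1$ exactly at consistency). The natural design space for the counterexamples is therefore spanned by $r$, together with $t_{12}$ (when one wishes to break $SI$) or with the second $SI$-invariant $s(\mathbf{T}) = t_{23}/t_{12}$ (when one wishes to break $HTA$).

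Four of the six cases closely parallel the corresponding constructions in \citet[Theorem~1]{Brunelli2017}. I would use the constant index $I \equiv 0$ to violate only $URS$; the step function that takes the value $0$ on consistent triads and $1$ elsewhere to violate only $CON$ (while satisfying $MSC$ with equalities throughout); the reversed index $I(\mathbf{T}) = \min\{r, 1/r\}$, still uniquely maximized at consistency and therefore compliant with $URS$, to violate only $MSC$; and the skewed index $I(\mathbf{T}) = \max\{r, r^{-\alpha}\}$ with some fixed $\alpha > 0$, $\alpha \neq 1$, to violate only $IIP$, since transposition sends $r \mapsto 1/r$ and then $\max\{r, r^{-\alpha}\} \neq \max\{r^{-1}, r^\alpha\}$ in general. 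All four of these depend only on $r$, so $HTA$ and $SI$ hold automatically, and the remaining axioms are straightforward to check directly.

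The two delicate cases are $SI$ and $HTA$. For $SI$ I propose $I_{SI}(\mathbf{T}) = 1 + (u-1)\bigl(1 + \tfrac{1}{2}\ln^2 t_{12}\bigr)$ with $u = \max\{r, 1/r\}$. The correction factor equals $1$ at $t_{12} = 1$ (so $HTA$ is preserved), is invariant under $t_{12} \mapsto 1/t_{12}$ (so $IIP$ holds), preserves $URS$ and $CON$ trivially, and breaks $SI$ because rescaling $t_{12} \mapsto k t_{12}$ alters $\ln^2 t_{12}$. For $HTA$ I employ the analogous formula $I_{HTA}(\mathbf{T}) = 1 + (u-1)\bigl(1 + \tfrac{1}{2}\ln^2 s\bigr)$ using the $SI$-invariant $s$ in place of $t_{12}$; now $URS$, $CON$, $SI$, $IIP$ are all immediate, while $HTA$ fails because $(1;\, t_{13};\, t_{23})$ gives $s = t_{23}$ whereas $(1;\, t_{13}/t_{23};\, 1)$ gives $s = 1$.

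The principal obstacle is the verification of $MSC$ for $I_{SI}$ and $I_{HTA}$, because raising a single entry of a consistent triad to the $\delta$th power changes both $u$ and the correction factor and can pull them in opposite directions. The approach is to differentiate the product with respect to $\delta$, factor out $\ln m$ where $m = \max(t_{ij}, 1/t_{ij})$ for the modified entry, and reduce the required sign condition to the elementary quadratic inequality $1 + \tfrac{1}{2}(\ln x)^2 \geq |\ln x|$, which holds for all $x > 0$ because its discriminant equals $1 - 2 < 0$. Performing this reduction for each of the three possible modified entries and both signs of $\delta - 1$ then finishes the argument, since the algebra is uniform across cases.
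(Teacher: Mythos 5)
Your proposal is correct and follows the same overall strategy as the paper: one witness index per axiom, each satisfying the other five. Your witnesses for $URS$, $MSC$, and $CON$ coincide with the paper's up to monotone rescaling ($\min\{r,1/r\}$ versus $-\max\{r,1/r\}$; a step of constant height versus the paper's $I^3$). For the other three axioms your formulas differ. For $HTA$ the paper multiplies the deviation $\max\{r,1/r\}-1$ by $t_{12}/t_{23}+t_{23}/t_{12}$, and for $SI$ it uses $\left| t_{12}-t_{13}/t_{23}\right| +\left| 1/t_{12}-t_{23}/t_{13}\right|$; your $\ln^2$-based correction factors play the same structural role, and your reduction of the $MSC$ verification to the inequality $1+\tfrac{1}{2}(\ln x)^2\geq \left| \ln x\right|$ does close: writing the derivative of $(u-1)\bigl(1+\tfrac{1}{2}w^2\bigr)$ in $\delta$ and bounding the cross term by $u\left| w\right| L \geq (u-1)\left| w\right| L$ works in all three entry cases and on both sides of $\delta=1$. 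Your witness for $IIP$ is in fact an improvement over the paper's: the index $I^4(\mathbf{T})=t_{13}/(t_{12}t_{23})$ used in the paper drops \emph{below} its consistent value $1$ when, say, an entry $t_{12}>1$ of a consistent triad is raised to a power $\delta>1$ (then $I^4=t_{12}^{1-\delta}<1$), so $I^4$ violates $MSC$ as well as $IIP$ and does not isolate the latter; your $\max\{r,r^{-\alpha}\}$ with $\alpha>0$, $\alpha\neq 1$ is uniquely minimized at $r=1$ and increases as $r$ moves away from $1$ in either direction, so it satisfies $MSC$ while still failing $IIP$. In short, the proposal is a valid proof, and the $IIP$ witness should arguably replace the one in the paper.
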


\begin{proof}
Independence of a given axiom can be shown by providing a triad inconsistency index that satisfies all axioms except the one at stake:
\begin{enumerate}[label=\fbox{\arabic*}]
\item
$URS$: The triad inconsistency index $I^1: \mathcal{T} \to \mathbb{R}$ such that $I^1(\mathbf{T}) = 0$ for all triads $\mathbf{T} \in \mathcal{T}$.
\item
$MSC$: The triad inconsistency index $I^2: \mathcal{T} \to \mathbb{R}$ such that
\[
I^2(\mathbf{T}) = - \max \left\{ \frac{t_{13}}{t_{12} t_{23}}; \, \frac{t_{12} t_{23}}{t_{13}} \right\}
\]
for all triads $\mathbf{T} \in \mathcal{T}$. $I^2$ can be called the inverse natural triad inconsistency index.
\item
$CON$: The triad inconsistency index $I^3: \mathcal{T} \to \mathbb{R}$ such that
\[
I^3(\mathbf{T}) = 
\left\{ \begin{array}{ll}
0 & \textrm{if $\mathbf{T}$ is consistent} \\
\max \left\{ t_{13} / \left( t_{12} t_{23} \right); \, \left( t_{12} t_{23} \right) / t_{13} \right\} + 1 & \textrm{otherwise}
\end{array} \right.
\]
for all triads $\mathbf{T} \in \mathcal{T}$. $I^3$ is essentially the index $I^T$, but it is not continuous in the environment of consistent matrices.
\item
$IIP$: The triad inconsistency index $I^4: \mathcal{T} \to \mathbb{R}$ such that
\[
I^4(\mathbf{T}) = \frac{t_{13}}{t_{12} t_{23}}
\]
for all triads $\mathbf{T} \in \mathcal{T}$. $I^4$ is essentially the natural triad inconsistency index $I^T$, but takes only the entries above the diagonal into account.
\item
$HTA$: The triad inconsistency index $I^5: \mathcal{T} \to \mathbb{R}$ such that
\begin{equation} \label{eq1}
I^5(\mathbf{T}) = \left( \frac{t_{12}}{t_{23}} + \frac{t_{23}}{t_{12}} \right) \cdot \left( \max \left\{ \frac{t_{13}}{t_{12} t_{23}}; \, \frac{t_{12} t_{23}}{t_{13}} \right\} - 1 \right)
\end{equation}
for all triads $\mathbf{T} \in \mathcal{T}$.
\item
$SI$: The triad inconsistency index $I^6: \mathcal{T} \to \mathbb{R}$ such that
\[
I^6(\mathbf{T}) = \left| t_{12} - \frac{t_{13}}{t_{23}} \right| + \left| \frac{1}{t_{12}} - \frac{t_{23}}{t_{13}} \right|
\]
for all triads $\mathbf{T} \in \mathcal{T}$.
\end{enumerate}

Proving that the triad inconsistency index $I^i$ satisfies all axioms except for the $i$th is straightforward if $1 \leq i \leq 4$, therefore left to the reader.

Consider the triad inconsistency index $I^5$. It is easy to see that this function is continuous, nonnegative and equals to zero if and only if a triad is consistent ($t_{13} = t_{12} t_{23}$), as well as it meets invariance under inversion of preferences and scale invariance. $I^5$ also satisfies monotonicity on single comparisons because the second term in formula \eqref{eq1} is essentially the natural triad inconsistency index, and the first term is increasing in both $t_{12}$ and $t_{23}$ ceteris paribus, while it is independent of $t_{13}$.
Finally, take the triads $\mathbf{T} = (1; \,8; \,4)$ and $\mathbf{T'} = (1; \,2; \,1)$, which lead to $I^5(\mathbf{T}) = 17/4 \neq 5/2 = I^5(\mathbf{T'})$, showing the violation of $HTA$.

Now look at the triad inconsistency index $I^6$. It is trivial to verify that $I^6$ satisfies $URS$, $MSC$, $CON$, and $IIP$. $HTA$ is also met as $I^6(\mathbf{T}) = I^6(\mathbf{T'})$ when $\mathbf{T} = (1; \,t_{13}; \,t_{23})$ and $\mathbf{T'} = (1; \,t_{13} / t_{23}; \,1)$.
Take the triads $\mathbf{T} = (1; \,8; \,4)$ and $\mathbf{T'} = (2; \,32; \,8)$, which result in $I^6(\mathbf{T}) = 3/2 \neq 9/4 = I^6(\mathbf{T'})$, presenting the violation of $SI$.
\end{proof}

To conclude, the axiomatic system consisting of $URS$, $MSC$, $CON$, $IIP$, $HTA$, and $SI$ satisfies logical consistency and independence on the set of triads $\mathcal{T}$.

\section{Characterization} \label{Sec5}

It still remains a question whether the extended set of properties is exhaustive on the set of triads $\mathcal{T}$ or not. We will show that the axioms are closely related to the natural triad inconsistency index: they mean that $I^T$ is the only appropriate index for measuring the inconsistency of triads.

\begin{theorem} \label{Theo51}
Let $\mathbf{S},\mathbf{T} \in \mathcal{T}$ be any triads and $I: \mathcal{T} \to \mathbb{R}$ be a triad inconsistency index satisfying $MSC$, $IIP$, $HTA$, and $SI$.
Then $I^T(\mathbf{S}) \geq I^T(\mathbf{T})$ implies $I(\mathbf{S}) \geq I(\mathbf{T})$.
\end{theorem}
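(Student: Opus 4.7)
My plan is to reduce each triad to a canonical form $(1; c; 1)$ with $c \geq 1$ equal to $I^T(\mathbf{T})$, using $SI$, $HTA$, and $IIP$, and then to prove monotonicity of $c \mapsto I(1; c; 1)$ on $[1, \infty)$ via $MSC$. The normalization mirrors the reduction argument from the proofs of Lemmas~\ref{Lemma41} and~\ref{Lemma42}: $SI$ with $k = 1/t_{12}$ followed by $HTA$ reduces any triad $\mathbf{T} = (t_{12}; t_{13}; t_{23})$ to $(1; t_{13}/(t_{12}t_{23}); 1)$, and $IIP$ --- which exchanges $(1; c; 1)$ with $(1; 1/c; 1)$ in triad notation --- allows me to assume the surviving entry is at least $1$. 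This entry is exactly $I^T(\mathbf{T})$. Writing $c_S = I^T(\mathbf{S})$ and $c_T = I^T(\mathbf{T})$, the theorem reduces to proving $I(1; c_S; 1) \geq I(1; c_T; 1)$ whenever $c_S \geq c_T \geq 1$.

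For the monotonicity step, I will invoke $MSC$ starting from a consistent triad whose perturbed entry differs from $1$. Assuming first $c_T > 1$, start from the consistent triad $(c_T; c_T; 1)$ and perturb its $(1,3)$ entry $c_T$ to $c_T^\delta$; applying $SI$ with $k = 1/c_T$ and then $HTA$ reduces $(c_T; c_T^\delta; 1)$ to $(1; c_T^{\delta-1}; 1)$. The choice $\delta = 2$ reaches $(1; c_T; 1)$, while $\delta_S = 1 + \log_{c_T}(c_S) \geq 2$ reaches $(1; c_S; 1)$; since $\delta_S \geq 2 > 1$, $MSC$ delivers $I(c_T; c_T^{\delta_S}; 1) \geq I(c_T; c_T^2; 1)$, which after these reductions is exactly $I(1; c_S; 1) \geq I(1; c_T; 1)$. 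The edge case $c_T = 1 < c_S$ is handled analogously by starting from $(c_S; c_S; 1)$ and perturbing with $\delta = 2$ (noting that any consistent triad reduces to $(1; 1; 1)$), and the case $c_S = c_T = 1$ is trivial.

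The main obstacle is the side condition $a_{ij} \neq 1$ built into $MSC$, which forbids applying the axiom directly at the consistent canonical triad $(1; 1; 1)$. The essential workaround is to launch the $MSC$-perturbation from an auxiliary consistent triad whose $(1,3)$ entry is strictly greater than $1$, and to translate the perturbation back to canonical form via $SI$ and $HTA$. With this device in place, $MSC$ becomes monotonicity of the one-variable function $c \mapsto I(1; c; 1)$ on $[1, \infty)$, from which the theorem follows.
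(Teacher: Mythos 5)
Your proof is correct and follows essentially the same route as the paper's: normalize both triads to the canonical form $(1;\,c;\,1)$ with $c \geq 1$ via $SI$, $HTA$, and $IIP$ (so that $c$ equals the value of $I^T$), then invoke $MSC$ to get monotonicity of $c \mapsto I(1;\,c;\,1)$. Your explicit workaround for the $a_{ij} \neq 1$ restriction in $MSC$ --- launching the perturbation from the auxiliary consistent triad $(c_T;\,c_T;\,1)$ and translating back with $SI$ and $HTA$ --- is in fact more careful than the paper's argument, which applies $MSC$ directly to $\mathbf{S_2}$ and $\mathbf{T_2}$ without spelling out how the axiom's hypotheses are met.
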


\begin{proof}
Assume that $I^T(\mathbf{S}) \geq I^T(\mathbf{T})$.
The idea is to gradually simplify the comparison of the inconsistencies of the two triads by using the axioms that are satisfied by the arbitrary triad inconsistency index $I: \mathcal{T} \to \mathbb{R}$.

Consider the triads $\mathbf{S_1} = (1; \,s_{13} / s_{12}^2; \,s_{23} / s_{12})$ and $\mathbf{T_1} = (1; \,t_{13} / t_{12}^2; \,t_{23} / t_{12})$. Since the natural triad inconsistency index satisfies $SI$, it is guaranteed that $I^T(\mathbf{S}) = I^T(\mathbf{S_1})$ and $I^T(\mathbf{T}) = I^T(\mathbf{T_1})$.

Consider the triads $\mathbf{S_2} = (1; \,s_{13} / (s_{12} s_{23}); \,1)$ and $\mathbf{T_2} = (1; \,t_{13} / (t_{12} t_{23}); \,1)$. As the natural triad inconsistency index meets $HTA$, it is known that $I^T(\mathbf{S_1}) = I^T(\mathbf{S_2})$ and $I^T(\mathbf{T_1}) = I^T(\mathbf{T_2})$.

$s_{13} / (s_{12} s_{23}) \geq 1$ and $t_{13} / (t_{12} t_{23}) \geq 1$ can be assumed without loss of generality due to $IIP$. Consequently, $I^T(\mathbf{S_2}) = I^T(\mathbf{S_1}) = I^T(\mathbf{S}) \geq I^T(\mathbf{T}) = I^T(\mathbf{T_1}) = I^T(\mathbf{T_2})$, which means that $s_{13} / (s_{12} s_{23}) \geq t_{13} / (t_{12} t_{23}) \geq 1$.

Starting from this inequality and using the properties of the triad inconsistency index $I:\mathcal{T} \to \mathbb{R}$, $MSC$ leads to $I(\mathbf{S_2}) \geq I(\mathbf{T_2})$, $HTA$ results in $I(\mathbf{S_1}) = I(\mathbf{S_2}) \geq I(\mathbf{T_2}) = I(\mathbf{T_1})$, and $SI$ implies that $I(\mathbf{S}) = I(\mathbf{S_1}) = I(\mathbf{S_2}) \geq I(\mathbf{T_2}) = I(\mathbf{T_1}) = I(\mathbf{T})$, which completes the proof.
\end{proof}

\begin{remark} \label{Rem51}
As Theorem~\ref{Theo51} shows, axioms $MSC$, $IIP$, $HTA$, and $SI$ allow for some odd triad inconsistency indices, for example, the \emph{flat triad inconsistency index} $I^F: \mathcal{T} \to \mathbb{R}$ such that $I^F(\mathbf{T}) = 0$ for any triad $\mathbf{T} \in \mathcal{T}$.
By attaching properties $URS$ and $CON$, inconsistency index $I^F$ is excluded, but they still allow for a \emph{`discretised' natural triad inconsistency index} $I^{DT}: \mathcal{T} \to \mathbb{R}$ defined as
\[
I^{DT}(\mathbf{T}) = 
\left\{ \begin{array}{ll}
I^{T}(\mathbf{T}) = \max \left\{ t_{13} / \left( t_{12} t_{23} \right); \, \left( t_{12} t_{23} \right) / t_{13} \right\} & \textrm{if $I^{T}(\mathbf{T}) \leq 2$} \\
2 & \textrm{otherwise}
\end{array} \right.
\]
for any triad $\mathbf{T} \in \mathcal{T}$.
\end{remark}

The proof of Theorem~\ref{Theo51} does not work in the reverse direction of $I(\mathbf{S}) \geq I(\mathbf{T}) \Rightarrow I^T(\mathbf{S}) \geq I^T(\mathbf{T})$ because monotonicity on single comparisons has been introduced without strict inequalities by \citet{BrunelliFedrizzi2015}.

\begin{axiom} \label{Axiom9}
\emph{Strong monotonicity on single comparisons} ($SMSC$):
Let $\mathbf{A} \in \mathcal{A}^{n \times n}$ be any consistent pairwise comparison matrix, $a_{ij} \neq 1$ a non-diagonal element and $\delta \in \mathbb{R}$.
Let $\mathbf{A}_{ij}(\delta) \in \mathcal{A}^{n \times n}$ be the inconsistent pairwise comparison matrix obtained from $\mathbf{A}$ by replacing the entry $a_{ij}$ with $a_{ij}^\delta$ and $a_{ji}$ with $a_{ji}^\delta$.
An inconsistency index $I: \mathcal{R}^n \to \mathbb{R}$ satisfies axiom $SMSC$ if
\begin{eqnarray*}
1 < \delta < \delta' & \Rightarrow & I(\mathbf{A}) < I \left( \mathbf{A}_{ij}(\delta) \right) < I \left( \mathbf{A}_{ij}(\delta') \right); \\
\delta' < \delta < 1  & \Rightarrow & I(\mathbf{A}) < I \left( \mathbf{A}_{ij}(\delta) \right) < I \left( \mathbf{A}_{ij}(\delta') \right).
\end{eqnarray*}
\end{axiom}

With the introduction of $SMSC$, there is no need for all of the six axioms.

\begin{lemma} \label{Lemma51}
Axioms $SMSC$, $CON$, $HTA$, and $SI$ imply $URS$ on the set of triads.
\end{lemma}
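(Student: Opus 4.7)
The plan is to prove both halves of $URS$ in turn: first, that all consistent triads share a common inconsistency value $v$; second, that no inconsistent triad attains $v$.

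For the first half, I would reuse the normalization already appearing in the proofs of Lemmas~\ref{Lemma41} and~\ref{Lemma42}. Given any triad $(t_{12};\,t_{13};\,t_{23})$, $SI$ with $k = 1/t_{12}$ sends it to $(1;\,t_{13}/t_{12}^2;\,t_{23}/t_{12})$, and $HTA$ then collapses the latter to $(1;\,t_{13}/(t_{12}t_{23});\,1)$, so
\[
I\bigl((t_{12};\,t_{13};\,t_{23})\bigr) = I\bigl((1;\,t_{13}/(t_{12}t_{23});\,1)\bigr).
\]
For a consistent triad one has $t_{13} = t_{12}t_{23}$, so the right-hand side equals $I((1;\,1;\,1)) =: v$. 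Hence every consistent triad has inconsistency exactly $v$.

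For the second half, the same reduction shows that an inconsistent triad equals, under $I$, a triad of the form $(1;\,x;\,1)$ with $x := t_{13}/(t_{12}t_{23}) \neq 1$; so it suffices to prove $I((1;\,x;\,1)) \neq v$ for every $x > 0$, $x \neq 1$. One cannot perturb directly from $(1;\,1;\,1)$, because $SMSC$ requires the reference entry $a_{ij}$ to be different from $1$. I would therefore take the consistent reference triad $\mathbf{C} = (y;\,y;\,1)$ for some fixed $y > 1$: its $(1,2)$-entry is $y \neq 1$, and the reduction above gives $I(\mathbf{C}) = v$. Applying $SMSC$ at position $(1,2)$ with exponent $\delta \neq 1$, the perturbed matrix $\mathbf{C}_{12}(\delta) = (y^\delta;\,y;\,1)$ reduces in turn to $(1;\,y^{1-\delta};\,1)$, and the strict inequalities in both clauses of $SMSC$ force $I((1;\,y^{1-\delta};\,1)) > v$ for every $\delta \neq 1$.

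Finally, as $\delta$ varies over $\mathbb{R}\setminus\{1\}$ the quantity $y^{1-\delta}$ sweeps out $(0,1)$ for $\delta > 1$ and $(1,\infty)$ for $\delta < 1$, i.e., all of $(0,\infty)\setminus\{1\}$; so for every $x \neq 1$ the matching $\delta$ yields $I((1;\,x;\,1)) > v$, completing the proof. The main subtle point is precisely the domain restriction in $SMSC$ that forbids perturbation from the identity triad, handled here by scaling the reference matrix to $(y;\,y;\,1)$ with $y \neq 1$ so that the perturbation of its single non-unity entry translates, through the $SI$--$HTA$ normalization, into a perturbation of the free coordinate of $(1;\,x;\,1)$. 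The axiom $CON$ does not appear to be strictly needed for this argument, as the strict inequality in $SMSC$ already separates $v$ from the inconsistency value of every non-consistent triad.
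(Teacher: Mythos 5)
Your proof is correct, and its second half takes a genuinely different route from the paper's. The paper also first collapses every triad to the normal form $(1;\,x;\,1)$ via $SI$ and $HTA$, but then separates the consistent from the inconsistent case by contradiction: assuming $I(\mathbf{T}) = I(\mathbf{S})$, it invokes $SMSC$ along the path $\delta \mapsto (1;\,x^{\delta};\,1)$ and derives a contradiction with $CON$ from $\lim_{\delta \to 0} (1;\,x^{\delta};\,1) = (1;\,1;\,1)$. That argument perturbs, in effect, from the all-ones triad, whose relevant entry equals $1$ -- the very case excluded by the hypothesis $a_{ij} \neq 1$ in $SMSC$ -- which is exactly the subtlety you flag and repair by choosing the consistent reference $\mathbf{C} = (y;\,y;\,1)$ with $y \neq 1$ and perturbing its $(1,2)$-entry, so that $\mathbf{C}_{12}(\delta)$ normalizes to $(1;\,y^{1-\delta};\,1)$ and sweeps out every inconsistent normal form. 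Your version buys three things: it applies $SMSC$ strictly within its stated domain, it yields the stronger conclusion $I(\mathbf{T}) > v$ for every inconsistent $\mathbf{T}$ rather than mere inequality, and it dispenses with $CON$ altogether, showing that $SMSC$, $HTA$, and $SI$ alone imply $URS$ on $\mathcal{T}$ (having $CON$ among the hypotheses is then harmless but unused). The paper's continuity-based argument, by contrast, only needs the limit behaviour near consistency and would survive weaker monotonicity along the specific path it uses, but as written it is less careful about the $a_{ij} \neq 1$ restriction than your construction.
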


\begin{proof}
Let $\mathbf{S} = (s_{12}; \,s_{13}; \,s_{23})$ and $\mathbf{T} = (t_{12}; \,t_{13}; \,t_{23})$ be any triads.
Let $I: \mathcal{T} \to \mathbb{R}$ be a triad inconsistency index satisfying $SMSC$, $CON$, $HTA$, and $SI$.

First, it is shown that $I(\mathbf{S}) = I(\mathbf{T})$ if triads $\mathbf{S}$ and $\mathbf{T}$ are consistent.
Consider the triads $\mathbf{S_1} = (1; \,s_{13} / s_{12}^2; \,s_{23} / s_{12})$ and $\mathbf{T_1} = (1; \,t_{13} / t_{12}^2; \,t_{23} / t_{12})$. Then $I(\mathbf{S}) = I(\mathbf{S_1})$ and $I(\mathbf{T}) = I(\mathbf{T_1})$ due to $SI$.
Consider the triads $\mathbf{S_2} = (1; \,s_{13} / (s_{12} s_{23}); \,1)$ and $\mathbf{T_2} = (1; \,t_{13} / (t_{12} t_{23}); \,1)$. Then $I(\mathbf{S_1}) = I(\mathbf{S_2})$ and $I(\mathbf{T_1}) = I(\mathbf{T_2})$ because of $HTA$.
Furthermore, $\mathbf{S_2} = \mathbf{T_2}$, so $I(\mathbf{S}) = I(\mathbf{T})$.

Second, it is proved that $I(\mathbf{S}) \neq I(\mathbf{T})$ if triad $\mathbf{S}$ is consistent but $\mathbf{T}$ is inconsistent.
Consider the triads $\mathbf{S_1} = (1; \,s_{13} / s_{12}^2; \,s_{23} / s_{12})$ and $\mathbf{T_1} = (1; \,t_{13} / t_{12}^2; \,t_{23} / t_{12})$. Then $I(\mathbf{S}) = I(\mathbf{S_1})$ and $I(\mathbf{S}) = I(\mathbf{S_1})$ due to $SI$.
Consider the triads $\mathbf{S_2} = (1; \,s_{13} / (s_{12} s_{23}); \,1$ and $\mathbf{T_2} = (1; \,t_{13} / (t_{12} t_{23}); \,1)$. Then $I(\mathbf{S_1}) = I(\mathbf{S_2})$ and $I(\mathbf{T_1}) = I(\mathbf{T_2})$ because of $HTA$.
Furthermore, $s_{13} / (s_{12} s_{23}) = 1$ and $t_{13} / (t_{12} t_{23}) \neq 1$. Let $\delta \in \mathbb{R}$ and $\mathbf{T}_{ij}(\delta) \in \mathcal{T}$ be the inconsistent triad obtained from $\mathbf{T_2}$ by replacing the entry $t_{13} / (t_{12} t_{23})$ with $\left[ t_{13} / (t_{12} t_{23}) \right]^\delta$.
Assume, for contradiction, that $I(\mathbf{T}) = I(\mathbf{S})$. Then $I \left( \mathbf{T}(\delta) \right) < I \left( \mathbf{T}(1/2) \right) < I(\mathbf{S})$ for any $0 < \delta < 1/2$ due to strong monotonicity on single comparisons, which contradicts to continuity because $\lim_{\delta \to 0} \mathbf{T}(\delta) = \mathbf{S}$.
\end{proof}

As Theorem~\ref{Theo41} has already revealed, the weaker property of $MSC$ cannot substitute $SMSC$ in the proof of Lemma~\ref{Lemma51}.

\begin{proposition} \label{Prop51}
Axioms $SMSC$, $CON$, $IIP$, $HTA$, and $SI$ form a logically consistent and independent axiomatic system on the set of triads $\mathcal{T}$.
\end{proposition}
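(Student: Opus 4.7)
The plan splits into two parts: exhibiting one triad inconsistency index that satisfies all five axioms, and producing, for each axiom in turn, an index satisfying the other four but violating the one in question.

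For logical consistency I would take the natural triad inconsistency index $I^T$ of Definition~\ref{Def41}. Proposition~\ref{Prop41} already yields $CON$, $IIP$, $HTA$, $SI$ and the weak $MSC$. The only additional verification is $SMSC$: if a single off-diagonal entry $t_{ij} \neq 1$ of a consistent triad is replaced by $t_{ij}^{\delta}$, the relevant argument inside the $\max$ becomes $t_{ij}^{\pm (\delta - 1)}$, which is strictly monotone in $|\delta - 1|$, delivering the strict chain of inequalities required.

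For independence I would, wherever possible, recycle the counterexamples from the proof of Theorem~\ref{Theo41}, re-checking that they survive the passage from $MSC$ to $SMSC$. The flat index $I^F \equiv 0$ serves for the $SMSC$ slot, since it meets the other four axioms trivially but returns equality where $SMSC$ demands strict inequality. The discontinuous index $I^3$ of Theorem~\ref{Theo41} works for the $CON$ slot because on the inconsistent region it coincides with $I^T + 1$, whose monotonicity is strict, and the jump from the consistent value $0$ to at least $2$ is also strict. The index $I^5$ works for the $HTA$ slot because its second factor is strictly monotone in $|\delta - 1|$ and its first factor stays bounded below by $2$ via AM--GM, so the product is strictly monotone. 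The index $I^6$ works for the $SI$ slot because it is a sum of two absolute-value terms that is strictly convex in the perturbed entry with minimum exactly at the consistent value.

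The one slot requiring a fresh construction is $IIP$, because a naively asymmetric choice such as $t_{13} / (t_{12} t_{23})$ fails monotonicity once $\delta$ crosses $1$ downward. A clean witness is
\[
I(\mathbf{T}) = \max \left\{ \frac{t_{13}}{t_{12} t_{23}}, \, \left( \frac{t_{12} t_{23}}{t_{13}} \right)^{2} \right\},
\]
which depends only on $x := t_{13} / (t_{12} t_{23})$ and is therefore scale-invariant and homogeneous in the $HTA$ sense, is continuous, equals $1$ precisely when $x = 1$ (i.e.\ at consistency), and is strictly monotone in $|\delta - 1|$ on each side of $x = 1$, so $SMSC$ holds. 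Transposition sends $x \mapsto 1/x$ and yields $\max \{ 1/x, x^{2} \} \neq \max \{ x, 1/x^{2} \}$ in general, so $IIP$ fails. I expect this $IIP$ slot to be the main obstacle: the asymmetry one must build in to violate $IIP$ can easily spoil the two-sided monotonicity demanded by $SMSC$, so keeping both features alive together requires the asymmetry to respect direction of perturbation on both sides of $x = 1$.
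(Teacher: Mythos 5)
Your proof is correct, and for the consistency part and four of the five independence slots it coincides with the paper's own argument: the paper likewise verifies $SMSC$ for $I^T$ and recycles the witnesses of Theorem~\ref{Theo41}, with the flat index covering the $SMSC$ slot. The genuine divergence is the $IIP$ slot, and there you have caught a real defect rather than merely taken a different route: the paper reuses $I^4(\mathbf{T}) = t_{13}/(t_{12} t_{23})$ and asserts that it satisfies $SMSC$, but $I^4$ fails even the weak $MSC$. Indeed, for the consistent triad $\mathbf{T}=(2;\,4;\,2)$ one has $I^4(\mathbf{T})=1$, while replacing $t_{13}=4$ by $4^{1/2}$ yields $I^4=1/2<1$, contradicting $I(\mathbf{A})\leq I\left(\mathbf{A}_{13}(\delta)\right)$ for $\delta=1/2$. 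Your replacement $\max\left\{x,\,x^{-2}\right\}$ with $x=t_{13}/(t_{12}t_{23})$ does the job: it depends only on $x$ (hence $HTA$, $SI$, and $CON$ hold), it equals $x$ for $x\geq 1$ and $x^{-2}$ for $x\leq 1$, so it is strictly increasing in $\left|\delta-1\right|$ on either side of consistency (hence $SMSC$ holds), and $\max\{2,\,1/4\}\neq\max\{1/2,\,4\}$ shows that $IIP$ is violated. The only soft spot in your write-up is the $I^5$ slot: when the perturbed entry is $t_{12}$ or $t_{23}$, the first factor $t_{12}/t_{23}+t_{23}/t_{12}$ also changes along the perturbation and can decrease, so ``bounded below by $2$ times strictly increasing'' does not by itself give strict monotonicity of the product; a short direct computation (which does go through) is required there, although the paper's own justification is no more detailed on this point.
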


\begin{proof}
For consistency, it is sufficient to check that the natural triad inconsistency index $I^T$ satisfies strong monotonicity on single comparisons.

For independence, see the proof of Theorem~\ref{Theo41}. The inconsistency indices $I^3$, $I^4$, $I^5$, and $I^6$ satisfy $SMSC$, too.
\end{proof}

With this strengthening of $MSC$, we are able to characterize the natural triad inconsistency index on the set of triads.

\begin{proposition} \label{Prop52}
Let $\mathbf{S},\mathbf{T} \in \mathcal{T}$ be two triads and $I:\mathcal{T} \to \mathbb{R}$ be a triad inconsistency index satisfying $SMSC$, $IIP$, $HTA$, and $SI$.
Then $I(\mathbf{S}) \geq I(\mathbf{T})$ if and only if $I^T(\mathbf{S}) \geq I^T(\mathbf{T})$.
\end{proposition}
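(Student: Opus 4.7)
The plan is to prove the biconditional by combining Theorem~\ref{Theo51} with a fresh argument for the missing direction. The forward implication $I^T(\mathbf{S}) \geq I^T(\mathbf{T}) \Rightarrow I(\mathbf{S}) \geq I(\mathbf{T})$ is immediate from Theorem~\ref{Theo51}, since $SMSC$ implies $MSC$ and the remaining axioms $IIP$, $HTA$, $SI$ are already assumed. The reverse implication will be handled by contraposition: assuming $I^T(\mathbf{S}) < I^T(\mathbf{T})$, I will derive the strict inequality $I(\mathbf{S}) < I(\mathbf{T})$.

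The first step is the canonical-form reduction already exploited in Lemmas~\ref{Lemma41} and~\ref{Lemma42} and in Theorem~\ref{Theo51}. Applying $SI$, $HTA$, and $IIP$ -- all of which are also satisfied by $I^T$ -- replaces $\mathbf{S}$ and $\mathbf{T}$ by the equivalent canonical triads $\mathbf{S_2} = (1; \,s; \,1)$ and $\mathbf{T_2} = (1; \,t; \,1)$ with $s, t \geq 1$, where $s = s_{13}/(s_{12} s_{23})$ and $t = t_{13}/(t_{12} t_{23})$. The hypothesis rewrites as $s < t$; in particular $t > 1$, which is exactly what will be needed to invoke $SMSC$ on a non-unit entry.

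The heart of the argument is to realise both $\mathbf{S_2}$ and $\mathbf{T_2}$ as single-entry perturbations of one common consistent triad. I propose to use $\mathbf{C} = (1; \,t; \,t)$, which is consistent and whose entry $a_{23} = t \neq 1$ is eligible for perturbation. The perturbed triad $\mathbf{C}_{23}(\delta) = (1; \,t; \,t^\delta)$, once passed through $HTA$, has the same $I$-value as $(1; \,t^{1-\delta}; \,1)$. Choosing $\delta_t = 0$ then recovers $\mathbf{T_2}$, while $\delta_s := 1 - \log_t s \in (0, 1]$ recovers $\mathbf{S_2}$. Since $0 = \delta_t < \delta_s \leq 1$, a single application of $SMSC$ -- with the endpoint $s = 1$ (equivalently $\delta_s = 1$) handled by picking any auxiliary $\delta'' < 0$ so that $\delta'' < 0 < 1$ forces $I(\mathbf{C}) < I(\mathbf{C}_{23}(0))$ -- yields the strict inequality $I(\mathbf{C}_{23}(\delta_s)) < I(\mathbf{C}_{23}(0))$, which via $HTA$ translates back to $I(\mathbf{S_2}) < I(\mathbf{T_2})$ and completes the contraposition.

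The main obstacle will be this second step: spotting the consistent triad $(1; \,t; \,t)$ and recognising that its $a_{23}$-perturbations, once normalised by $HTA$, parametrise exactly the one-parameter family $(1; \,x; \,1)$ containing both canonical forms. The strict version $SMSC$ is essential here -- the weaker $MSC$ of Theorem~\ref{Theo51} would only yield $I(\mathbf{S_2}) \leq I(\mathbf{T_2})$, which is insufficient for the reverse direction and which is precisely the gap witnessed by the discretised natural triad inconsistency index of Remark~\ref{Rem51}.
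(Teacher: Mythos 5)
Your proof is correct and follows the same overall strategy as the paper: the forward direction is delegated to Theorem~\ref{Theo51}, and the reverse direction reduces both triads to the canonical forms $(1;\,s;\,1)$ and $(1;\,t;\,1)$ with $s,t\geq 1$ via $SI$, $HTA$, and $IIP$, after which $SMSC$ separates them. The paper's own proof of the reverse direction is only a one-line remark that Theorem~\ref{Theo51} can be run backwards; your construction of the auxiliary consistent triad $\mathbf{C}=(1;\,t;\,t)$, whose $a_{23}$-perturbations are mapped by $HTA$ onto the one-parameter family $(1;\,x;\,1)$, is a genuine added value, because the literal statement of $SMSC$ requires the perturbed entry of the consistent matrix to differ from $1$ and hence cannot be applied to the all-ones triad directly. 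Your treatment of the boundary case $s=1$ (i.e.\ $\delta_s=1$) via an auxiliary $\delta''<0$ is also sound, so the contrapositive argument closes the biconditional without gaps.
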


\begin{proof}
For the direction $I^T(\mathbf{S}) \geq I^T(\mathbf{T}) \Rightarrow I(\mathbf{S}) \geq I(\mathbf{T})$, see Theorem~\ref{Theo51}.

For $I(\mathbf{S}) \geq I(\mathbf{T}) \Rightarrow I^T(\mathbf{S}) \geq I^T(\mathbf{T})$, the proof of Theorem~\ref{Theo51} can be followed in the reverse direction with the assumption $I(\mathbf{S}) \geq I(\mathbf{T})$. The key point is the implication $I(\mathbf{S_2}) \geq I(\mathbf{T_2}) \Rightarrow s_{13} / (s_{12} s_{23}) \geq t_{13} / (t_{12} t_{23}) \geq 1$, which is guaranteed if the triad inconsistency index $I$ satisfies strong monotonicity on single comparisons, but not necessarily true if it meets only $MSC$.
\end{proof}

On the basis of Proposition~\ref{Prop52}, our main result can be formulated.

\begin{theorem} \label{Theo52}
The natural triad inconsistency index is essentially the unique triad inconsistency index satisfying strong monotonicity on single comparisons, invariance under inversion of preferences, homogeneous treatment of alternatives, and scale invariance.
\end{theorem}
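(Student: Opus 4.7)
The plan is to interpret \emph{essentially unique} in the standard way: the natural triad inconsistency index $I^T$ is unique up to ordinal equivalence, i.e.\ any triad inconsistency index $I$ satisfying the four axioms induces exactly the same inconsistency ranking on $\mathcal{T}$ as $I^T$. Under this interpretation, the theorem becomes an almost direct corollary of Proposition~\ref{Prop52}, and the bulk of the work has already been done in Theorem~\ref{Theo51} and in the reverse direction of Proposition~\ref{Prop52}.

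First, I would check that $I^T$ itself satisfies the four axioms $SMSC$, $IIP$, $HTA$, and $SI$. The properties $IIP$, $HTA$, and $SI$ are immediate from the defining formula $I^T(\mathbf{T}) = \max\{t_{13}/(t_{12} t_{23}),\, (t_{12} t_{23})/t_{13}\}$: inversion swaps the two arguments of the max, the $HTA$ reduction sends $(1;\,t_{13};\,t_{23})$ to $(1;\,t_{13}/t_{23};\,1)$ which yields the same value $\max\{t_{13}/t_{23},\,t_{23}/t_{13}\}$, and the scale transformation $(t_{12};\,t_{13};\,t_{23}) \mapsto (k t_{12};\,k^2 t_{13};\,k t_{23})$ leaves the ratio $t_{13}/(t_{12} t_{23})$ invariant. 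For $SMSC$, one uses that on a consistent triad perturbing a single off-diagonal entry by an exponent $\delta$ changes exactly one of the two ratios strictly monotonically, so $I^T$ is strictly increasing in $|\delta - 1|$; this is already invoked at the start of the proof of Proposition~\ref{Prop51}.

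Now let $I\colon \mathcal{T} \to \mathbb{R}$ be any triad inconsistency index satisfying the four axioms. By Proposition~\ref{Prop52}, for all triads $\mathbf{S},\mathbf{T} \in \mathcal{T}$ one has
\[
I(\mathbf{S}) \geq I(\mathbf{T}) \iff I^T(\mathbf{S}) \geq I^T(\mathbf{T}).
\]
This is precisely the statement that $I$ and $I^T$ induce the same weak order on $\mathcal{T}$, which is the meaning of \emph{essentially unique} here. Combining this with the first step (that $I^T$ itself is an admissible index) gives the characterization: the ranking induced by $I^T$ is the unique ranking compatible with the four axioms.

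The potentially tricky point is that without $SMSC$ — using only $MSC$ — the reverse implication in Proposition~\ref{Prop52} fails, as Remark~\ref{Rem51} illustrates with the flat index $I^F$ and the discretised index $I^{DT}$, both of which coarsen the $I^T$-ranking. Strengthening $MSC$ to $SMSC$ is exactly what rules out such coarsenings, because it forces strict monotonicity in the one scalar $t_{13}/(t_{12} t_{23})$ to which the $SI$ and $HTA$ reductions compress every triad. Once this is appreciated, no further obstacle remains and the theorem follows.
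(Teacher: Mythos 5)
Your proposal is correct and follows the same route as the paper: Theorem~\ref{Theo52} is stated there as an immediate consequence of Proposition~\ref{Prop52} (together with the fact, already used for logical consistency in Proposition~\ref{Prop51}, that $I^T$ itself satisfies the four axioms), with ``essentially'' meaning uniqueness of the induced inconsistency ranking. Your reading of the role of $SMSC$ versus $MSC$ also matches the paper's discussion in Remark~\ref{Rem51} and after Proposition~\ref{Prop52}.
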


The term \emph{essentially} refers to the fact that the four axioms $SMSC$, $IIP$, $HTA$, and $SI$ characterize only the \emph{inconsistency ranking} induced by the natural triad inconsistency index. Nonetheless, \citet{Csato2018a} argues that it does not make sense to distinguish inconsistency indices which rank pairwise comparison matrices uniformly.
Naturally, continuity can also be attached to these four axioms but it is rather a technical property. 

\begin{remark} \label{Rem52}
Remark~\ref{Rem51} remains valid in the case of \citet[Theorem~1]{Csato2018a}, which is true only in the following revised form:

\vspace{0.25cm}
\noindent
\emph{Let $\mathbf{A}$ and $\mathbf{B}$ two pairwise comparison matrices.
If $\succeq$ is an inconsistency ranking satisfying positive responsiveness, invariance under inversion of preferences, homogeneous treatment of entities, scale invariance, monotonicity, and reducibility, then $\mathbf{A} \succeq^K \mathbf{B}$ implies $\mathbf{A} \succeq \mathbf{B}$.}
\vspace{0.25cm}

Contrary to \citet[Theorem~1]{Csato2018a}, the implication does not hold in the other direction.
This problem can be easily solved by introducing the first axiom, positive responsiveness ($PR$) in a more powerful version called \emph{strong positive responsiveness} ($SPR$) with strict inequalities:

\vspace{0.25cm}
\noindent
\emph{Consider two triads $\mathbf{S} = (1; \,s_2; \,1)$ and $\mathbf{T} = (1; \,t_2; \,1)$ such that $s_2,t_2 \geq 1$.
Inconsistency ranking $\succeq$ satisfies $SPR$ if $\mathbf{S} \succ \mathbf{T} \iff s_2 < t_2$.}
\vspace{0.25cm}

Then the Koczkodaj inconsistency ranking would be the unique inconsistency ranking satisfying strong positive responsiveness, invariance under inversion of preferences, homogeneous treatment of entities, scale invariance, monotonicity, and reducibility.
\end{remark}

\section{Conclusions} \label{Sec6}

Axiomatic discussion of inconsistency measurement seems to be fruitful. While it is a well-established research direction in the choice of an appropriate weighting method \citep{Fichtner1984, Fichtner1986, BarzilaiCookGolany1987, Barzilai1997, CookKress1988, Bryson1995, Csato2017b, Csato2018c, BozokiTsyganok2019, Csato2019a, CsatoPetroczy2019b}, formal studies of inconsistency indices has not been undertaken until recently \citep{BrunelliFedrizzi2015, Brunelli2017, BrunelliFedrizzi2019, KoczkodajSzwarc2014, KoczkodajUrban2018, Csato2018a}.

The contribution of this paper can be shortly summarized as a unification of the two axiomatic approaches.
The first aims to justify reasonable properties and analyse indices in their light \citep{BrunelliFedrizzi2015, Brunelli2017}.
The second concentrates on the exact derivation of certain indices without spending too much time on the motivation of the axioms \citep{Csato2018a}. In particular, the axiomatic system of \citet{Brunelli2017} has been presented to be not exhaustive even for only three alternatives. However, by the introduction of two new properties, a unique triad inconsistency ranking can be identified.

Although most inconsistency indices are functionally related on this domain \citep{Cavallo2019}, hence they induce the same inconsistency ranking, our main finding is a powerful argument against indices which violate some of the axioms on the set of triads, like the Ambiguity Index \citep{SaloHamalainen1995, SaloHamalainen1997}, the Relative Error \citep{Barzilai1998}, or the Cosine Consistency Index \citep{KouLin2014}. This fact illustrates that it is worth discussing inconsistency indices on special classes of pairwise comparison matrices, similarly to \citet{CernanovaKoczkodajSzybowski2018}.
The results derived here can serve as a solid basis for measuring the inconsistency of pairwise comparison matrices for order greater than three.

\section*{Acknowledgements}
\addcontentsline{toc}{section}{Acknowledgements}
\noindent
We would like to thank \emph{S\'andor Boz\'oki}, \emph{Matteo Brunelli} and \emph{Mikl\'os Pint\'er} for inspiration. \\
\emph{Tam\'as Halm} and two anonymous reviewers provided valuable comments and suggestions on an earlier draft. \\
The research was supported by OTKA grant K 111797 and by the MTA Premium Postdoctoral Research Program.

\bibliographystyle{apalike}
\bibliography{All_references}

\end{document}